\newcommand{\eref}[1]{Eq.~\ref{#1}}
\newcommand{\fref}[1]{Figure~\ref{#1}}
\newcommand{\tref}[1]{Table~\ref{#1}}
\newcommand{\sref}[1]{Section~\ref{#1}}
\newcommand{\aref}[1]{Algorithm~\ref{#1}}
\newcommand{\lref}[1]{Lemma~\ref{#1}}
\newcommand{\thmref}[1]{Theorem~\ref{#1}}
\newcommand{\cref}[1]{Corollary~\ref{#1}}
\DeclarePairedDelimiterX{\inp}[2]{\langle}{\rangle}{#1, #2}
\newcommand{\expnumber}[2]{{#1}\mathrm{e}{#2}}
\newtheorem{theorem}{Theorem}
\newtheorem{lemma}[theorem]{Lemma}
\newtheorem{corollary}[theorem]{Corollary}
\numberwithin{equation}{section}
\begin{document}

\title{\Large Fast Incremental SVDD Learning Algorithm with the Gaussian Kernel}

\author{
Hansi Jiang\thanks{SAS Institute Inc., 100 SAS Campus Drive, Cary, North Carolina 27513} \\
\texttt{Hansi.Jiang@sas.com}
\and
Haoyu Wang\footnotemark[1] \\
\texttt{Haoyu.Wang@sas.com}
\and
Wenhao Hu\footnotemark[1] \\
\texttt{Wenhao.Hu@sas.com}
\and
Deovrat Kakde\footnotemark[1] \\
\texttt{Dev.Kakde@sas.com}
\and
Arin Chaudhuri\footnotemark[1] \\
\texttt{Arin.Chaudhuri@sas.com}
}
\date{}

\maketitle

\begin{abstract} %\small\baselineskip=9pt 
Support vector data description (SVDD) is a machine learning technique that is used for single-class classification and outlier detection. The idea of SVDD is to find a set of support vectors that defines a boundary around data. When dealing with online or large data, existing batch SVDD methods have to be rerun in each iteration. We propose an incremental learning algorithm for SVDD that uses the Gaussian kernel. This algorithm builds on the observation that all support vectors on the boundary have the same distance to the center of sphere in a higher-dimensional feature space as mapped by the Gaussian kernel function. Each iteration involves only the existing support vectors and the new data point. Moreover, the algorithm is based solely on matrix manipulations; the support vectors and their corresponding Lagrange multiplier $\alpha_i$'s are automatically selected and determined in each iteration. It can be seen that the complexity of our algorithm in each iteration is only $O(k^2)$, where $k$ is the number of support vectors. Experimental results on some real data sets indicate that FISVDD demonstrates significant gains in efficiency with almost no loss in either outlier detection accuracy or objective function value.
\end{abstract}
%\\
{\bfseries Keywords:}
Outlier detection, Classification, Support vector data description, Quadratic programming, Online learning, Internet of Things (IoT)

\section{Introduction}
Much effort has been made to detect faults and state shifts in industrial machines through monitoring data sensors. Successful fault diagnosis reduces cost of maintenance and improves both worker and machine efficiency. In machine learning, fault diagnosis can be viewed as an outlier detection problem. Support vector data description (SVDD), a machine learning technique that is used for single-class classification and outlier detection, is similar to support vector machine (SVM). SVDD was first introduced in \citet{tax2004support}, although the concept of using SVM to detect novelty was introduced in \citet{scholkopf2000support}. SVDD is used in domains where the majority of data belongs to a single class, or when one of the classes is significantly undersampled. The SVDD algorithm builds a flexible boundary around the target class data; this data boundary is characterized by observations that are designated as support vectors. Having the advantage that no assumptions about the distribution of outliers need to be made, SVDD can describe the shape of the target class without prior knowledge of the specific data distribution and can flag observations that fall outside the data boundary as potential outliers. In the case of machine monitoring, data on the normal working conditions of a machine are in abundance, whereas information from outlier system failures are few. By using SVDD on the well-sampled target class, one can obtain a boundary around the distribution of normal working data, and subsequently capture the outlier points where the machine is faulty.  

Traditional batch methods of SVDD typically pursue a global optimal solution of the SVDD problem; they suffer from low efficiency by considering all available data points. Moreover, these methods are usually ineffective when handling streaming data because the entire algorithm must be rerun with each incoming data point. In contrast, incremental methods deal with large or streaming data efficiently by focusing on smaller portions of the original optimization problem, as in \citet{syed1999incremental}. Online variants of SVDD concentrate only on the current support vector set with incoming data.

\citet{cauwenberghs2001incremental} give an incremental and decremental training algorithm for SVM. Their method, also called the C\&P algorithm, provides an exact solution for training data and one new data point. \citet{tax2003online} use a numerical method to solve incremental SVM, and they describe the relationship between incremental SVM and online SVDD. Their research was extended in \citet{laskov2006incremental}, which provides complete learning algorithms for incremental SVM and SVDD.

The algorithm given in \citet{laskov2006incremental} updates weights of each support vector based on the fact that Karush-Kuhn-Tucker (KKT) conditions must be satisfied before and after a new data point comes in. Consequently, all data points must be kept to pursue an objective value closer to the global optimal value. Furthermore, a kernel matrix must be calculated every update, which can be memory-consuming and slow for large data. 

These issues are handled by the algorithm that we propose: fast incremental support vector data description (FISVDD). One of the most important properties of support vectors is that in the most simplified form of SVDD they all have the same distance to the center of a sphere. A similar property remains even when the problem is generalized to flexible boundaries. This property is at the core of FISVDD. Unlike the method in \citet{laskov2006incremental}, FISVDD uses only matrix manipulations to find interior points and support vectors, and it is highly efficient in detecting outliers. It can be used either as a batch method or as an online method. It can be seen that the complexity of key parts of FISVDD is $O(k^2)$, where $k$ is the number of support vectors. By \citet{kakde2017peak}, the number of support vectors should be much less than the number of observations in order to avoid overfitting.

The rest of the paper is organized as follows. In \sref{svddprob}, we introduce the SVDD problem in \citet{tax2004support}. In \sref{theorems}, we state some theoretical support for FISVDD. In \sref{fisvdd}, the FISVDD algorithm is introduced and explained. In \sref{misc}, we discuss several important issues in implementing FISVDD. In \sref{experiments}, FISVDD is applied to some data sets and compared with other methods. Finally, in \sref{conclusion}, we give our conclusions.

In this paper we follow traditional linear algebra notation. Bold capital letters stand for matrices, and bold small letters stand for vectors. Specifically, matrix $\mathbf{A}$ is used as a Gaussian kernel matrix, and $\mathbf{A}_{k}$ is the Gaussian kernel matrix in the $k$th iteration. The vector $\mathbf{x}>\mathbf{0}$ stands for a positive vector, and $\mathbf{x}\ge \mathbf{0}$ stands for a nonnegative vector.

\section{The SVDD Problem}\label{svddprob}
The SVDD problem is first discussed by \citet{tax2004support}. The idea of SVDD is to find support vectors and use them to define a boundary around data. If a testing data point lies outside the boundary, it is classified as an outlier; otherwise, it is classified as normal data. The simplest form of a boundary is a sphere. For a set of data points $\mathbf{x}_1, \mathbf{x}_2, \ldots, \mathbf{x}_n$, the mathematical formulation of the problem is to find a nonnegative vector $\boldsymbol{\alpha}$ that contains Lagrange multipliers for all data points, $\|\boldsymbol{\alpha}\|_1=1$, such that the following is maximized:
\begin{equation}\label{spheredual}
L=\sum_{i=1}^{n}\alpha_{i}\inp{\mathbf{x}_{i}}{\mathbf{x}_{i}}-\sum_{i,j}\alpha_{i}\alpha _{j}\inp{\mathbf{x}_{i}}{\mathbf{x}_{j}}.
\end{equation}
Here $\inp{\mathbf{x}_{i}}{\mathbf{x}_{j}}$ is the inner product of $\mathbf{x}_i$ and $\mathbf{x}_j$. According to \citet{tax2004support}, there are three possibilities for each data point. The $\mathbf{x}_i$'s that have zero $\alpha_i$'s are \textit{interior points}. The $\mathbf{x}_i$'s for which $0<\alpha_i<C$ for a preselected $0<C\le1$ lie on the boundary and are called \textit{support vectors}. The $\mathbf{x}_i$'s for which $\alpha_i=C$ are outliers (also called \textit{bounded support vectors}, or bsv, in \citet{ben2001support}). In this paper, we assume there are no outliers in the training phase, so we set $C=1$. One example of where our algorithm would be useful is when there is a known period during which the incoming data are normal, such as streaming sensor data from machines or vehicles operating under normal conditions. Then the model can be used to detect abnormal states. To determine whether a new data point $\mathbf{z}$ lies inside the boundary, first the distance between $\mathbf{z}$ and the center of the sphere, $\mathbf{a}$, is calculated:
\begin{equation}\label{d^2orig}
d^2(\mathbf{z})=\|\mathbf{z}-\mathbf{a}\|^2 = \inp{\mathbf{z}}{\mathbf{z}} \\ 
-2\sum_i\alpha_i\inp{\mathbf{z}}{\mathbf{x}_i}+\sum_{i,j}\alpha_i\alpha_j\inp{\mathbf{x}_i}{\mathbf{x}_j}.
\end{equation}
This distance is then compared to the radius of the sphere for any support vector $\mathbf{x}_k$:
\begin{equation}\label{r^2orig}
R^2 = \inp{\mathbf{x}_k}{\mathbf{x}_k} \\ 
-2\sum_i\alpha_i\inp{\mathbf{x}_k}{\mathbf{x}_i}+\sum_{i,j}\alpha_i\alpha_j\inp{\mathbf{x}_i}{\mathbf{x}_j}.
\end{equation}
A test data point $\mathbf{z}$ is accepted if $d^2\le R^2$, and it is classified as an outlier if $d^2> R^2$. This check is also called \textit{scoring}. It is easy to derive the conclusion that scoring is equivalent to checking whether the new data point violates the current KKT conditions.  

A kernel function is needed to draw a more flexible boundary around data in order to avoid underfitting. By \citet{tax2004support}, using a kernel function is equivalent to implicitly mapping data points to a higher feature space. Usually the Gaussian kernel,
\begin{equation}\label{gaussian}
K(\mathbf{x}_i,\mathbf{x}_j) = \exp(-\frac{\|\mathbf{x}_i-\mathbf{x}_j\|_2^2}{2\sigma^2}),
\end{equation}
is preferred \citep{ben2001support,laskov2006incremental,gu2015incremental}, and the Gaussian kernel bandwidth $\sigma$ must be selected beforehand. There are some papers that discuss how to choose a proper Gaussian kernel bandwidth \citep{evangelista2007some,xiao2014two,kakde2017peak}. Throughout this paper, it is assumed that the Gaussian similarity is used and that a proper Gaussian kernel bandwidth $\sigma$ has been chosen such that the number of support vectors is much less than the number of observations. As stated in \sref{misc}, FISVDD has protections even if a bad bandwidth is provided. With the Gaussian kernel function, the objective function \eref{spheredual} can be simplified to minimizing
\begin{equation}\label{flexdual}
L=\sum_{i,j}\alpha_{i}\alpha _{j}K(\mathbf{x}_i,\mathbf{x}_j),
\end{equation}
because $K(\mathbf{x}_i,\mathbf{x}_i)=1$, $\|\boldsymbol{\alpha}\|_1=1$, and $\boldsymbol{\alpha}$ is nonnegative.

\eref{flexdual} can also be expressed in matrix form:
\begin{equation}\label{flexmat}
L=\boldsymbol{\alpha}^T\mathbf{A}\boldsymbol{\alpha},
\end{equation}
where $\mathbf{A}$ is a Gaussian similarity matrix for all support vectors and $\boldsymbol{\alpha}>\mathbf{0}$. Formulas \eref{d^2orig} and \eref{r^2orig} then become as follows, respectively:
\begin{equation}\label{d2}
d^2(\mathbf{z}) = 1-2\sum_i\alpha_iK(\mathbf{z},\mathbf{x}_i)+\sum_{i,j}\alpha_i\alpha_jK(\mathbf{x}_i,\mathbf{x}_j),
\end{equation}
\begin{equation}
R^2 = 1-2\sum_i\alpha_iK(\mathbf{x}_k,\mathbf{x}_i)+\sum_{i,j}\alpha_i\alpha_jK(\mathbf{x}_i,\mathbf{x}_j).
\end{equation}
Note that to determine whether a test data point $\mathbf{z}$ should be accepted, one can compute only
\begin{equation}\label{dtb}
Q(\mathbf{z}) = (d^2(\mathbf{z}) - R^2)/2 = \\ 
\sum_i\alpha_iK(\mathbf{x}_k,\mathbf{x}_i) - \sum_i\alpha_iK(\mathbf{z},\mathbf{x}_i).
\end{equation}
$Q(\mathbf{z})\le0$ means that $\mathbf{z}$ is an interior point. It is worth mentioning that all support vectors satisfy $d^2=R^2$, although they might have different $\alpha_i$'s.

\section{Theoretical Foundations}\label{theorems}
Here we state and prove several theorems necessary for later discussion. First, we state a lemma in \citet{smola1998learning} that a Gaussian similarity matrix has full rank. A direct conclusion of the lemma is that a Gaussian similarity matrix is symmetric positive definite (spd).
\begin{lemma}\label{spd}
Suppose $\mathbf{x}_1,\mathbf{x}_2,\ldots,\mathbf{x}_k$ are distinct points and $\sigma\neq0$. Then their Gaussian similarity matrix $\mathbf{A}$ formed with \eref{gaussian} has full rank.
\end{lemma}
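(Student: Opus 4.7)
The plan is to prove the equivalent statement that $\mathbf{A}$ is strictly positive definite, from which invertibility (full rank) follows. Assume for contradiction that $\mathbf{A}$ is singular; then there is a nonzero $\mathbf{c}\in\mathbb{R}^{k}$ with $\mathbf{A}\mathbf{c}=\mathbf{0}$, hence $\mathbf{c}^{T}\mathbf{A}\mathbf{c}=\sum_{i,j}c_{i}c_{j}K(\mathbf{x}_{i},\mathbf{x}_{j})=0$. The goal reduces to showing that this quadratic form is strictly positive whenever $\mathbf{c}\neq\mathbf{0}$ and the $\mathbf{x}_{i}$ are distinct.

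The most natural route is via the translation invariance of the Gaussian. Write $K(\mathbf{x}_{i},\mathbf{x}_{j})=g(\mathbf{x}_{i}-\mathbf{x}_{j})$ with $g(\mathbf{y})=\exp(-\|\mathbf{y}\|_{2}^{2}/(2\sigma^{2}))$, and use Fourier inversion to express $g(\mathbf{y})=\int\hat{g}(\boldsymbol{\omega})\,e^{i\boldsymbol{\omega}\cdot\mathbf{y}}\,d\boldsymbol{\omega}$, where the Fourier transform $\hat{g}$ is (up to a positive constant) another Gaussian and therefore strictly positive everywhere. Substituting and interchanging sum and integral yields
\begin{equation*}
\mathbf{c}^{T}\mathbf{A}\mathbf{c}=\int \hat{g}(\boldsymbol{\omega})\,\Bigl|\sum_{j=1}^{k}c_{j}e^{i\boldsymbol{\omega}\cdot\mathbf{x}_{j}}\Bigr|^{2}\,d\boldsymbol{\omega}.
\end{equation*}
Since $\hat{g}>0$, the vanishing of the left-hand side forces $\sum_{j}c_{j}e^{i\boldsymbol{\omega}\cdot\mathbf{x}_{j}}=0$ for (almost) all $\boldsymbol{\omega}$, and by continuity for every $\boldsymbol{\omega}$.

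The remaining step, and the main obstacle, is to conclude that distinct $\mathbf{x}_{j}$ force the $c_{j}$ to be zero. I would argue this by a Vandermonde-style trick: pick a direction $\mathbf{u}$ such that the scalars $t_{j}=\mathbf{u}\cdot\mathbf{x}_{j}$ are pairwise distinct (possible because finitely many "bad" directions lie in hyperplanes), restrict to $\boldsymbol{\omega}=s\mathbf{u}$, and evaluate the identity $\sum_{j}c_{j}e^{ist_{j}}=0$ at $k$ distinct values of $s$. The resulting $k\times k$ coefficient matrix is a Vandermonde matrix in $e^{is t_{j}}$, which is invertible because the $e^{ist_{j}}$ are distinct for generic $s$; hence $c_{j}=0$ for all $j$, contradicting $\mathbf{c}\neq\mathbf{0}$.

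An alternative to the Fourier route, which bypasses Bochner's theorem, is to expand $K(\mathbf{x}_{i},\mathbf{x}_{j})=e^{-\|\mathbf{x}_{i}\|^{2}/(2\sigma^{2})}\,e^{-\|\mathbf{x}_{j}\|^{2}/(2\sigma^{2})}\,e^{\mathbf{x}_{i}\cdot\mathbf{x}_{j}/\sigma^{2}}$ and use the Taylor series of $e^{\mathbf{x}_{i}\cdot\mathbf{x}_{j}/\sigma^{2}}$ to exhibit an explicit (infinite-dimensional) feature map $\phi$ with $K(\mathbf{x}_{i},\mathbf{x}_{j})=\langle\phi(\mathbf{x}_{i}),\phi(\mathbf{x}_{j})\rangle$, reducing positive definiteness to showing the $\phi(\mathbf{x}_{i})$ are linearly independent for distinct $\mathbf{x}_{i}$. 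Either way, the crux of the argument is the separation of distinct points, which both approaches handle by exploiting the analyticity of the relevant exponentials; I would favor the Fourier/Bochner version because the computation is cleaner and the positivity of $\hat{g}$ is transparent.
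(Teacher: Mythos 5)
The paper does not actually prove this lemma; it simply cites \citet{smola1998learning} and moves on, so there is no in-paper argument to compare against. Your proof is a correct, self-contained version of the standard argument that the Gaussian kernel is strictly positive definite: translation invariance plus Fourier inversion gives $\mathbf{c}^{T}\mathbf{A}\mathbf{c}=\int\hat{g}(\boldsymbol{\omega})\bigl|\sum_{j}c_{j}e^{i\boldsymbol{\omega}\cdot\mathbf{x}_{j}}\bigr|^{2}\,d\boldsymbol{\omega}$ with $\hat{g}>0$, and the problem correctly reduces to the linear independence of distinct complex exponentials. One small imprecision in the last step: evaluating $\sum_{j}c_{j}e^{ist_{j}}=0$ at $k$ \emph{arbitrary} distinct values of $s$ does not produce a Vandermonde matrix; you need the sample points in arithmetic progression $s=0,s_{0},2s_{0},\ldots,(k-1)s_{0}$ so the matrix has entries $z_{j}^{m}$ with $z_{j}=e^{is_{0}t_{j}}$ (distinct for generic $s_{0}$), or, more cleanly, differentiate $m$ times at $s=0$ to get $\sum_{j}c_{j}t_{j}^{m}=0$ for $m=0,\ldots,k-1$ and invoke the Vandermonde determinant in the distinct reals $t_{j}$ directly. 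With that repair the argument is complete; your alternative feature-map sketch would also work but requires its own linear-independence argument for the $\phi(\mathbf{x}_{i})$, so the Fourier route is indeed the more economical choice.
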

\lref{spd} implies that $\mathbf{A}$ is spd and its inverse exists. Next, we state lemmas to obtain $\mathbf{A}_{k+1}^{-1}$ if $\mathbf{A}_{k}^{-1}$ is known and vice versa. In FISVDD, we need to update the inverse of the similarity matrix when a new data point comes in. The proof involves only matrix calculations and is skipped.

\begin{lemma}\label{increinv}
Suppose $\mathbf{A}_k$ and $\mathbf{A}_{k+1}$ are both Gaussian similarity matrices and
\begin{equation}\label{vvec}
\mathbf{A}_{k+1} = \begin{bmatrix}
\mathbf{A}_k && \mathbf{v}\\
\mathbf{v}^T && 1
\end{bmatrix}.
\end{equation}
If $\mathbf{A}_k^{-1}$ is known, then $\mathbf{A}_{k+1}^{-1}$
is given by
\begin{equation}\label{ak+1inv}
\mathbf{A}_{k+1}^{-1} = \begin{bmatrix}
\mathbf{A}_k^{-1} + \mathbf{p}\mathbf{p}^T/\beta && -\mathbf{p}/\beta\\
-\mathbf{p}^T/\beta && 1/\beta
\end{bmatrix},
\end{equation}
where
$\mathbf{p} = \mathbf{A}_k^{-1}\mathbf{v}$ and $\beta = 1 - \mathbf{v}^T\mathbf{A}_k^{-1}\mathbf{v}=1-\mathbf{v}^T\mathbf{p}$.
\end{lemma}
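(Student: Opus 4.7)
The plan is to verify \eref{ak+1inv} by direct block multiplication, checking that $\mathbf{A}_{k+1}\mathbf{A}_{k+1}^{-1}$ equals the $(k+1)\times(k+1)$ identity. This is just the Schur-complement block inversion formula with $\mathbf{v}$, $\mathbf{v}^T$, and $1$ occupying the off-diagonal and bottom-right positions, so no cleverness is required beyond bookkeeping. An alternative route would be to postulate the form
\begin{equation*}
\mathbf{A}_{k+1}^{-1} = \begin{bmatrix} \mathbf{M} & \mathbf{q} \\ \mathbf{q}^T & c \end{bmatrix}
\end{equation*}
and solve for $\mathbf{M}$, $\mathbf{q}$, $c$ from the identity equations, but the verification route is shorter and gives the same bookkeeping.

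Before writing $1/\beta$ I would first certify $\beta \neq 0$. By \lref{spd} both $\mathbf{A}_k$ and $\mathbf{A}_{k+1}$ are spd, so the block determinant identity
\begin{equation*}
\det(\mathbf{A}_{k+1}) = \det(\mathbf{A}_k)\,(1 - \mathbf{v}^T\mathbf{A}_k^{-1}\mathbf{v}) = \det(\mathbf{A}_k)\,\beta
\end{equation*}
forces $\beta = \det(\mathbf{A}_{k+1})/\det(\mathbf{A}_k) > 0$. Hence the right-hand side of \eref{ak+1inv} is well-defined, and by symmetry of $\mathbf{A}_k^{-1}$ it is itself symmetric, as befits the inverse of an spd matrix.

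Next I would compute the four blocks of the candidate product, relying throughout on the two identities $\mathbf{A}_k \mathbf{p} = \mathbf{v}$ and $\mathbf{v}^T\mathbf{p} = 1 - \beta$. The $(1,1)$ block gives $\mathbf{A}_k(\mathbf{A}_k^{-1} + \mathbf{p}\mathbf{p}^T/\beta) + \mathbf{v}(-\mathbf{p}^T/\beta) = \mathbf{I}_k + \mathbf{v}\mathbf{p}^T/\beta - \mathbf{v}\mathbf{p}^T/\beta = \mathbf{I}_k$. The $(1,2)$ block collapses as $-\mathbf{A}_k\mathbf{p}/\beta + \mathbf{v}/\beta = \mathbf{0}$. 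The $(2,1)$ and $(2,2)$ entries reduce by the same substitutions to $\mathbf{0}^T$ and $1$, respectively.

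Since there is nothing subtle to overcome, I would expect the main risk to be purely notational: the two cancellations in the $(1,1)$ block depend on the cross-term $\mathbf{v}\mathbf{p}^T/\beta$ being produced with the correct sign, and the lower blocks rely on substituting $\mathbf{v}^T\mathbf{p} = 1-\beta$ rather than $\beta-1$. Note that nothing in this argument uses the specific Gaussian form of the entries of $\mathbf{A}_k$ and $\mathbf{A}_{k+1}$ beyond the appeal to \lref{spd}; the same inversion formula is valid whenever $\mathbf{A}_k$ is invertible and $\beta \neq 0$.
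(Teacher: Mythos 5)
Your verification is correct and is exactly the ``matrix calculations'' proof that the paper explicitly skips: all four block products check out, and your determinant argument for $\beta>0$ matches the paper's own remark that $\beta$ is the (positive) Schur complement of an spd matrix. Nothing further is needed.
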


\lref{increinv} provides a method to compute $\mathbf{A}_{k+1}^{-1}$ by using $\mathbf{A}_k^{-1}$ and an incremental vector $\mathbf{v}$. Note that to compute $\mathbf{A}_{k+1}^{-1}$, we only need to compute $\mathbf{p} = \mathbf{A}_k^{-1}\mathbf{v}$. Also note that $\beta$ is the Schur complement \citep{meyer2000matrix} of $\mathbf{A}_k^{-1}$ in $\mathbf{A}_{k+1}^{-1}$. Since $\mathbf{A}_{k+1}$ is spd, $\beta$ is positive \citep{gallier2010schur}. The inverse of \lref{increinv} is straightforward and shown below. 

\begin{lemma}\label{decreinv}
Suppose $\mathbf{A}_{k+1}$ is spd and its inverse is given by
\begin{equation}
\mathbf{A}_{k+1}^{-1} = \begin{bmatrix}
\mathbf{P}_{k\times k} && \mathbf{u}\\
\mathbf{u}^T && \lambda
\end{bmatrix}.
\end{equation} 
Then the inverse of $\mathbf{A}_k$ is 
\begin{equation}\label{ak-1inv}
\mathbf{A}_{k}^{-1} = \mathbf{P} - \mathbf{u}\mathbf{u}^T/\lambda.
\end{equation}
\end{lemma}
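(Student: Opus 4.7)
The plan is to derive the formula directly from the defining block identity $\mathbf{A}_{k+1}\mathbf{A}_{k+1}^{-1} = \mathbf{I}_{k+1}$ rather than by invoking a matrix inversion formula. Writing $\mathbf{A}_{k+1}$ in the partitioned form of \eref{vvec} and multiplying by the given expression for $\mathbf{A}_{k+1}^{-1}$, I would read off the four block equations
\begin{align*}
\mathbf{A}_k \mathbf{P} + \mathbf{v}\mathbf{u}^T &= \mathbf{I}_k, &
\mathbf{A}_k \mathbf{u} + \lambda \mathbf{v} &= \mathbf{0}, \\
\mathbf{v}^T \mathbf{P} + \mathbf{u}^T &= \mathbf{0}^T, &
\mathbf{v}^T \mathbf{u} + \lambda &= 1.
\end{align*}
The first and second equations already contain everything I need.

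Because $\mathbf{A}_{k+1}$ is spd, its inverse is also spd, so the diagonal entry $\lambda$ is strictly positive and in particular nonzero. Thus the second block equation can be solved for $\mathbf{v}$ to give $\mathbf{v} = -\mathbf{A}_k\mathbf{u}/\lambda$. Substituting this into the first block equation yields
\begin{equation*}
\mathbf{A}_k\mathbf{P} - \mathbf{A}_k\mathbf{u}\mathbf{u}^T/\lambda = \mathbf{I}_k,
\qquad\text{i.e.}\qquad \mathbf{A}_k\bigl(\mathbf{P} - \mathbf{u}\mathbf{u}^T/\lambda\bigr) = \mathbf{I}_k.
\end{equation*}
Since $\mathbf{A}_k$ is a Gaussian similarity matrix, \lref{spd} guarantees that it is invertible, so I may left-multiply by $\mathbf{A}_k^{-1}$ to obtain exactly the claimed identity \eref{ak-1inv}.

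There is really no hard step here: the only thing to be careful about is justifying that $\lambda \neq 0$ and that $\mathbf{A}_k$ itself is invertible, both of which follow immediately from \lref{spd} applied to the principal submatrix $\mathbf{A}_k$ and to $\mathbf{A}_{k+1}$. As a sanity check, I would also verify consistency with \lref{increinv}: plugging in $\mathbf{u} = -\mathbf{p}/\beta$ and $\lambda = 1/\beta$ gives $\mathbf{u}\mathbf{u}^T/\lambda = \mathbf{p}\mathbf{p}^T/\beta$, so $\mathbf{P} - \mathbf{u}\mathbf{u}^T/\lambda = \mathbf{A}_k^{-1}$, matching the forward direction.
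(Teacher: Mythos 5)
Your proof is correct. The paper itself gives no argument for this lemma --- it is dismissed as the ``straightforward'' inverse of \lref{increinv} --- so there is no official proof to match against; but your derivation is exactly the kind of block computation the authors are waving at, and it is complete. The four block equations from $\mathbf{A}_{k+1}\mathbf{A}_{k+1}^{-1}=\mathbf{I}_{k+1}$ are read off correctly, $\lambda>0$ follows from $\mathbf{A}_{k+1}^{-1}$ being spd, and eliminating $\mathbf{v}$ gives $\mathbf{A}_k(\mathbf{P}-\mathbf{u}\mathbf{u}^T/\lambda)=\mathbf{I}_k$, which identifies the inverse since $\mathbf{A}_k$ is square (a one-sided inverse of a square matrix is the inverse, so the appeal to \lref{spd} for invertibility of $\mathbf{A}_k$ is not even strictly needed, though it is harmless). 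It is worth noting that your closing ``sanity check'' --- substituting $\mathbf{P}=\mathbf{A}_k^{-1}+\mathbf{p}\mathbf{p}^T/\beta$, $\mathbf{u}=-\mathbf{p}/\beta$, $\lambda=1/\beta$ from \eref{ak+1inv} and cancelling --- is in fact the one-line proof the paper implicitly intends, modulo an appeal to uniqueness of the inverse to identify the blocks. Your main argument is slightly more self-contained, since it never assumes the partition of $\mathbf{A}_{k+1}^{-1}$ was produced by \lref{increinv}; either route is fine.
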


\lref{increinv} and \lref{decreinv} together play an essential role in FISVDD to increase efficiency. It can be seen from the lemmas that only $O(k^2)$ multiplications are needed to obtain the updated matrix inverse. Next, we prove that if a positive solution is obtained for the linear system $\mathbf{A}\boldsymbol{\alpha} = \mathbf{e}$, then all data points in the system are support vectors. This is from the property that all support vectors satisfy $d^2 = R^2$.

\begin{theorem}\label{allsv}
A set of data points $\mathbf{x}_1,\mathbf{x}_2,\ldots,\mathbf{x}_k$ are all support vectors if and only if 
\begin{equation}\label{alleq}
\mathbf{A}_k\boldsymbol{\alpha} = \mathbf{e}
\end{equation}
has a positive solution, where $\mathbf{e}$ indicates a vector that contains all 1's with proper dimension.
\end{theorem}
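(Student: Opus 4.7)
The plan is to translate the geometric condition ``all points are support vectors'' into the algebraic statement that $\mathbf{A}_k\boldsymbol{\alpha}$ is proportional to $\mathbf{e}$, and then absorb the proportionality constant by rescaling. The paper already points the way: all support vectors satisfy $d^2=R^2$, and because the last term of \eref{d2} is independent of the index $j$, the equality $d^2(\mathbf{x}_j)=R^2$ for every $j$ is equivalent to the linear-system condition $\mathbf{A}_k\boldsymbol{\alpha}=c\,\mathbf{e}$ for some constant $c$. So I would split the argument into the two directions and handle the rescaling explicitly.

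For the ``only if'' direction I would start from the optimal multiplier $\hat{\boldsymbol{\alpha}}$ of the SVDD program, which by assumption is strictly positive, sums to $1$, and by the support-vector property forces $d^2(\mathbf{x}_j)=R^2$ for every $j$. Expanding \eref{d2} and isolating the term $\sum_i\hat\alpha_i K(\mathbf{x}_j,\mathbf{x}_i)$ gives the constant-in-$j$ condition $\mathbf{A}_k\hat{\boldsymbol{\alpha}}=c\,\mathbf{e}$. Since the Gaussian entries and $\hat{\boldsymbol{\alpha}}$ are all positive, $c>0$, so $\boldsymbol{\alpha}:=\hat{\boldsymbol{\alpha}}/c$ is a positive solution of $\mathbf{A}_k\boldsymbol{\alpha}=\mathbf{e}$.

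For the ``if'' direction, suppose $\boldsymbol{\alpha}>\mathbf{0}$ satisfies $\mathbf{A}_k\boldsymbol{\alpha}=\mathbf{e}$. Let $s:=\mathbf{e}^T\boldsymbol{\alpha}>0$ and $\hat{\boldsymbol{\alpha}}:=\boldsymbol{\alpha}/s$, so $\hat{\boldsymbol{\alpha}}>\mathbf{0}$, $\mathbf{e}^T\hat{\boldsymbol{\alpha}}=1$, and $\mathbf{A}_k\hat{\boldsymbol{\alpha}}=(1/s)\mathbf{e}$. I would then verify the KKT conditions for the QP of minimizing $\boldsymbol{\beta}^T\mathbf{A}_k\boldsymbol{\beta}$ subject to $\mathbf{e}^T\boldsymbol{\beta}=1$ and $\boldsymbol{\beta}\ge\mathbf{0}$: with equality-constraint multiplier $\lambda=2/s$, stationarity $2\mathbf{A}_k\hat{\boldsymbol{\alpha}}=\lambda\mathbf{e}$ holds, the inequality constraints are inactive so their multipliers vanish and complementary slackness is trivial, and feasibility is immediate. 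By \lref{spd} the matrix $\mathbf{A}_k$ is symmetric positive definite, so the objective is strictly convex and $\hat{\boldsymbol{\alpha}}$ is the unique global minimizer of the SVDD program. Positivity of its entries together with $\mathbf{e}^T\hat{\boldsymbol{\alpha}}=1$ (assuming $k\ge 2$) then forces $0<\hat\alpha_i<1=C$, so every $\mathbf{x}_i$ is a support vector.

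The main obstacle, in my view, is the last step of the ``if'' direction: a point is a \emph{support vector} only with respect to the optimal SVDD multiplier, so producing a positive KKT point is not by itself enough. Strict convexity, inherited from \lref{spd}, is what promotes the KKT point to the unique optimizer and therefore lets us identify the entries of $\hat{\boldsymbol{\alpha}}$ with the defining $\alpha_i$'s of the data points. The remaining work is bookkeeping around the positive scaling factor $c$ (equivalently $1/s$) and noting the degenerate $k=1$ case, in which a single point is trivially its own support vector.
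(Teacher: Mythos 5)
Your proof is correct, and its core is the same observation the paper uses: ``all points are support vectors'' is equivalent to $\mathbf{A}_k\hat{\boldsymbol{\alpha}}=c\,\mathbf{e}$ for a positive constant $c$, which after rescaling is the system \eref{alleq}. Where you genuinely diverge is in the ``if'' direction. The paper simply asserts that a positive solution of \eref{alleq} ``implies that all $\mathbf{x}_i$'s satisfy $d^2(\mathbf{x}_i)=R^2$ and thus are all support vectors,'' which conflates having equal distances under \emph{some} positive weight vector with being a support vector under the \emph{optimal} Lagrange multipliers. You correctly identify this as the real obstacle and close it: you verify that the normalized vector $\hat{\boldsymbol{\alpha}}=\boldsymbol{\alpha}/s$ is a KKT point of the QP $\min\boldsymbol{\beta}^T\mathbf{A}_k\boldsymbol{\beta}$ subject to $\mathbf{e}^T\boldsymbol{\beta}=1$, $\boldsymbol{\beta}\ge\mathbf{0}$, and then invoke \lref{spd} to get strict convexity and hence uniqueness of the minimizer, so the positive entries of $\hat{\boldsymbol{\alpha}}$ really are the $\alpha_i$'s of the model. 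What this buys is a complete equivalence rather than a plausibility argument; the only cost is the extra bookkeeping with the multiplier $\lambda=2/s$ and the (correctly flagged) degenerate case $k=1$, where $\hat\alpha_1=1=C$ fails the strict inequality $\alpha_i<C$ in the paper's definition. Your ``only if'' direction matches the paper's, with the rescaling by $c>0$ made explicit rather than left implicit.
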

\begin{proof}
Suppose that $\mathbf{x}_1,\mathbf{x}_2,\ldots,\mathbf{x}_k$ are all support vectors. Then they all satisfy $d^2(\mathbf{x}_i) = R^2$ in \eref{dtb}, and thus the $d^2(\mathbf{x}_i)$'s are all equal. From \eref{d2}, the middle terms, 
\begin{equation}\label{middleterm}
\sum_i\alpha_iK(\mathbf{z},\mathbf{x}_i),
\end{equation}
are all equal for any support vector $\mathbf{z}$. Putting \eref{middleterm} together for all support vectors results in the left-hand side of \eref{alleq}. Therefore, \eref{alleq} has a positive solution. On the other hand, \eref{alleq} implies that all $\mathbf{x}_i$'s satisfy $d^2(\mathbf{x}_i) = R^2$ and thus are all support vectors. 
\end{proof}

If a new data point $\mathbf{x}_{k+1}$ is added to the existing support vector set but the $(k+1)$th position in the solution to the linear system $\mathbf{A}_{k+1}\boldsymbol{\alpha}= \mathbf{e}$ is not positive, then the new data point is an interior point. This is proven in the next theorem.

\begin{theorem}\label{notallsv}
Suppose data points $\mathbf{x}_1,\mathbf{x}_2,\ldots,\mathbf{x}_k$ form a support vector set. Then a new data point $\mathbf{x}_{k+1}$ is an interior point if and only if $\mathbf{A}_{k+1}\boldsymbol{\alpha}=\mathbf{e}\Rightarrow\alpha_{k+1}\le0$. 
\end{theorem}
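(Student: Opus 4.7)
The plan is to derive a closed form for $\alpha_{k+1}$ from \lref{increinv} and then compare its sign to the scoring criterion $Q(\mathbf{x}_{k+1})\le 0$ from \eref{dtb}. Let $\mathbf{v}$ denote the $k$-vector with entries $K(\mathbf{x}_{i},\mathbf{x}_{k+1})$, so that $\mathbf{A}_{k+1}$ has the block form of \eref{vvec}. Applying \eref{ak+1inv} to the all-ones $(k+1)$-vector and reading off the last component yields
\begin{equation*}
\alpha_{k+1} \;=\; \frac{1-\mathbf{p}^{T}\mathbf{e}}{\beta},\qquad \mathbf{p}=\mathbf{A}_{k}^{-1}\mathbf{v},\qquad \beta=1-\mathbf{v}^{T}\mathbf{p},
\end{equation*}
where $\mathbf{e}$ is the $k$-dimensional all-ones vector. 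The denominator $\beta$ is strictly positive by the Schur-complement remark following \lref{increinv}, so the sign of $\alpha_{k+1}$ is controlled solely by $1-\mathbf{p}^{T}\mathbf{e}$.

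Next I would invoke \thmref{allsv} on $\mathbf{x}_{1},\dots,\mathbf{x}_{k}$ to produce a strictly positive $\boldsymbol{\alpha}^{(k)}$ with $\mathbf{A}_{k}\boldsymbol{\alpha}^{(k)}=\mathbf{e}$, equivalently $\mathbf{A}_{k}^{-1}\mathbf{e}=\boldsymbol{\alpha}^{(k)}$. Substituting turns $\mathbf{p}^{T}\mathbf{e}$ into $\mathbf{v}^{T}\boldsymbol{\alpha}^{(k)}=\sum_{i=1}^{k}\alpha_{i}^{(k)}K(\mathbf{x}_{k+1},\mathbf{x}_{i})$. Combined with $\beta>0$, the condition $\alpha_{k+1}\le 0$ is therefore equivalent to $\sum_{i}\alpha_{i}^{(k)}K(\mathbf{x}_{k+1},\mathbf{x}_{i})\ge 1$.

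The last step is to match this inequality with the interior criterion $Q(\mathbf{x}_{k+1})\le 0$ of \eref{dtb}. Because $\mathbf{A}_{k}\boldsymbol{\alpha}^{(k)}=\mathbf{e}$, every existing support vector $\mathbf{x}_{j}$ satisfies $\sum_{i}\alpha_{i}^{(k)}K(\mathbf{x}_{j},\mathbf{x}_{i})=1$, so feeding $\boldsymbol{\alpha}^{(k)}$ into \eref{dtb} gives $Q(\mathbf{x}_{k+1})=1-\sum_{i}\alpha_{i}^{(k)}K(\mathbf{x}_{k+1},\mathbf{x}_{i})$. Since $Q$ is linear in $\boldsymbol{\alpha}$, its sign is the same whether one uses the $\ell_{1}$-normalized $\boldsymbol{\alpha}$ demanded by \eref{flexmat} or the unnormalized $\boldsymbol{\alpha}^{(k)}$. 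Thus $Q(\mathbf{x}_{k+1})\le 0$ is equivalent to $\sum_{i}\alpha_{i}^{(k)}K(\mathbf{x}_{k+1},\mathbf{x}_{i})\ge 1$, which completes the chain and hence both directions of the theorem.

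The main obstacle I anticipate is not any single calculation but the normalization bookkeeping between \eref{dtb} and \thmref{allsv}: the $\alpha_{i}$'s of \eref{dtb} obey $\|\boldsymbol{\alpha}\|_{1}=1$, whereas $\boldsymbol{\alpha}^{(k)}=\mathbf{A}_{k}^{-1}\mathbf{e}$ does not. The rescue is the linearity of $Q$ in $\boldsymbol{\alpha}$, which makes its sign invariant under positive rescaling; once this is noted, the rest of the argument is a direct read-off from the block inverse of \lref{increinv}.
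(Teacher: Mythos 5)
Your proposal is correct and follows essentially the same route as the paper: both read off $\alpha_{k+1}=(1-\mathbf{e}^{T}\mathbf{A}_{k}^{-1}\mathbf{v})/\beta$ from the block inverse of \lref{increinv}, use $\beta>0$ (Schur complement of an spd matrix), and identify the numerator with the sign of $d^{2}-R^{2}$ via $\boldsymbol{\alpha}^{(k)}=\mathbf{A}_{k}^{-1}\mathbf{e}$. Your explicit remark that $Q$ is homogeneous in $\boldsymbol{\alpha}$, so its sign is unchanged by the $\ell_{1}$-normalization, is a point the paper passes over silently, and it tightens the argument rather than changing it.
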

\begin{proof}
Suppose that $\mathbf{A}_{k+1}\boldsymbol{\alpha}=\mathbf{e}\Rightarrow\alpha_{k+1}\le0$. By \lref{increinv}, we have 
\begin{equation}
\alpha_{k+1} = [\mathbf{A}_{k+1}^{-1}\mathbf{e}]_{k+1}
=\begin{bmatrix}
%\mathbf{A}_k^{-1} + \mathbf{p}\mathbf{p}^T/\beta && -\mathbf{p}/\beta\\
-\mathbf{p}^T/\beta && 1/\beta
\end{bmatrix}\mathbf{e}.
\end{equation}
Because $\alpha_{k+1}\le0$, we have
\begin{equation}\label{alphak+1}
\alpha_{k+1} = \frac{1-\mathbf{e}^T\mathbf{A}_k^{-1}\mathbf{v}}{1-\mathbf{v}^T\mathbf{A}_k^{-1}\mathbf{v}}\le0. 
\end{equation}
Because $\beta = 1-\mathbf{v}^T\mathbf{A}_k^{-1}\mathbf{v} > 0$, we have 
\begin{equation}\label{proof1}
1-\mathbf{e}^T\mathbf{A}_k^{-1}\mathbf{v}\le0.
\end{equation} 
We want to prove that $d^2-R^2\le0$ for $\mathbf{x}_{k+1}$. Note that
\begin{equation}\label{dist2-r2}
\begin{split}
(d^2-R^2)/2 &= \boldsymbol{\alpha}_k^T{\mathbf{A}_k}_{(*i)} - \boldsymbol{\alpha}_k^T\mathbf{v} \\
&= (\mathbf{A}_k^{-1}\mathbf{e})^T{\mathbf{A}_k}_{(*i)} - (\mathbf{A}_k^{-1}\mathbf{e})^T\mathbf{v}\\
&=\mathbf{e}^T\mathbf{A}_k^{-1}{\mathbf{A}_k}_{(*i)} - \mathbf{e}^T\mathbf{A}_k^{-1}\mathbf{v} \\
&=1-\mathbf{e}^T\mathbf{A}_k^{-1}\mathbf{v},
\end{split}
\end{equation}
where ${\mathbf{A}_{k}}_{(*i)}$ is the $i$th column of ${\mathbf{A}_{k}}$. By \eref{proof1}, we have $d^2-R^2\le0$.

On the other hand, suppose $\mathbf{x}_{k+1}$ is strictly inside the boundary. Then we have 
\begin{equation}
(d^2-R^2)/2 = 1-\mathbf{e}^T\mathbf{A}_k^{-1}\mathbf{v} \le 0.
\end{equation}
Then
\begin{equation}
\alpha_{k+1} = \frac{1-\mathbf{e}^T\mathbf{A}_k^{-1}\mathbf{v}}{1-\mathbf{v}_k^T\mathbf{A}_k^{-1}\mathbf{v}}\le0. 
\end{equation}
\end{proof}

\thmref{notallsv} says that if we put a new data point $\mathbf{x}_i$ into an existing support vector set to form an expanded set and the $(k+1)$th position in the solution to the expanded system $\mathbf{A}_{k+1}\boldsymbol{\alpha}=\mathbf{e}$ is less than 0, then $\mathbf{x}_i$ is an interior point and thus can be ignored. Because we can permute the rows and columns in $\mathbf{A}_{k+1}^{-1}$, by \thmref{notallsv} if $\alpha_i\le0$ for $1\le i\le k$, we can take $\mathbf{x}_i$ out of the expanded set and solve the shrunken $k\times k$ linear system. We can continue shrinking the system until there are no negative entries in $\boldsymbol{\alpha}$; then a support vector set is obtained. We summarize this shrinking step in the next corollary.
\begin{corollary}\label{notallsvcor}
A data point $\mathbf{x}_{i}$ is an interior point if and only if $\mathbf{A}_{k+1}\boldsymbol{\alpha}=\mathbf{e}\Rightarrow\alpha_{i}\le0$ and the shrunken $k\times k$ linear system has a positive solution.
\end{corollary}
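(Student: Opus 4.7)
The plan is to reduce the statement to \thmref{notallsv} via a permutation-of-coordinates argument, since \thmref{notallsv} already handles the case where the candidate interior point sits in the last row and column of $\mathbf{A}_{k+1}$. Let $\mathbf{P}$ be the permutation matrix that exchanges indices $i$ and $k+1$. Conjugating gives $\tilde{\mathbf{A}}_{k+1} = \mathbf{P}^T\mathbf{A}_{k+1}\mathbf{P}$, which is precisely the Gaussian similarity matrix of the reordered set of points and so remains symmetric positive definite by \lref{spd}. Because $\mathbf{P}\mathbf{e}=\mathbf{e}$, the system $\mathbf{A}_{k+1}\boldsymbol{\alpha}=\mathbf{e}$ transforms into $\tilde{\mathbf{A}}_{k+1}\tilde{\boldsymbol{\alpha}}=\mathbf{e}$ with $\tilde{\boldsymbol{\alpha}}=\mathbf{P}^T\boldsymbol{\alpha}$, and the condition $\alpha_i\le 0$ is exactly $\tilde{\alpha}_{k+1}\le 0$. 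Analogously, the "shrunken" $k\times k$ system obtained by removing $\mathbf{x}_i$ coincides with the leading $k\times k$ block of $\tilde{\mathbf{A}}_{k+1}$.

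For the "if" direction, assume both that the shrunken system has a positive solution and that $\alpha_i\le 0$ in the full system. By \thmref{allsv}, the remaining $k$ points form a support vector set, so after the permutation above the configuration satisfies the hypotheses of \thmref{notallsv} applied to $\mathbf{x}_i$ in the last slot; the conclusion is that $\mathbf{x}_i$ is an interior point. For the "only if" direction, assume $\mathbf{x}_i$ is an interior point with respect to the other $k$ points viewed as a support vector set. Then \thmref{allsv} guarantees the shrunken system admits a positive solution, and applying \thmref{notallsv} in its other direction to the permuted configuration forces $\tilde{\alpha}_{k+1}\le 0$, which translates back to $\alpha_i\le 0$.

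The main obstacle is conceptual rather than computational: one must be careful that the notion of "interior point" in the corollary is interpreted relative to the boundary determined by the \emph{other} $k$ points, and that this notion is well defined only when those $k$ points actually constitute a valid support vector set. The explicit requirement that the shrunken system have a positive solution is exactly what certifies, via \thmref{allsv}, the existence of this reference support-vector set. Once that point is clarified, the proof is a matter of permutation bookkeeping followed by direct appeals to \thmref{allsv} and \thmref{notallsv}.
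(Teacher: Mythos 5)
Your proof is correct and follows essentially the same route the paper takes: the paper justifies this corollary in the preceding paragraph precisely by permuting the rows and columns of $\mathbf{A}_{k+1}$ so that $\mathbf{x}_i$ occupies the last position and then invoking \thmref{notallsv}, with \thmref{allsv} certifying that the remaining $k$ points form the reference support vector set. Your write-up merely makes the permutation bookkeeping ($\mathbf{P}\mathbf{e}=\mathbf{e}$, preservation of positive definiteness) and the role of the positive solution of the shrunken system explicit, which the paper leaves implicit.
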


Finally, we state and prove an observation that relates the objective function value, the 1-norm of the \textit{unnormalized} $\boldsymbol{\alpha}$ vector, and the scoring threshold. The observation is substantial for implementing FISVDD. With it a lot of unnecessary computations can be saved. This observation can be also used to make sure that the objective function value in FISVDD is not larger than the objective function value obtained in the previous iteration so the FISVDD model is improved.
\begin{corollary}\label{alphanorm}
The objective function value in \eref{flexmat} with positive $\boldsymbol{\alpha}$, $\|\boldsymbol{\alpha}\|_1 = 1$, satisfies
\begin{equation}\label{alphanorm_eq1}
L = \frac{1}{\|\boldsymbol{\alpha}_0\|_1},
\end{equation}
where $\boldsymbol{\alpha}= \boldsymbol{\alpha}_0 / \|\boldsymbol{\alpha}_0\|_1$. Moreover, it holds that
\begin{equation}\label{alphanorm_eq2}
L = \sum_i\alpha_iK(\mathbf{z},\mathbf{x}_i),
\end{equation}
where the $\mathbf{x}_i$'s are the support vectors and $\mathbf{z}$ is any one of the support vectors.
\end{corollary}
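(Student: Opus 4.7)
The plan is to exploit the characterization in \thmref{allsv}: when $\mathbf{x}_1, \ldots, \mathbf{x}_k$ are all support vectors, the linear system $\mathbf{A}\boldsymbol{\alpha}_0 = \mathbf{e}$ has a positive solution, and this $\boldsymbol{\alpha}_0$ is precisely the \emph{unnormalized} Lagrange-multiplier vector the corollary refers to. Dividing by $\|\boldsymbol{\alpha}_0\|_1$ gives the normalized SVDD weights $\boldsymbol{\alpha}$ satisfying $\|\boldsymbol{\alpha}\|_1 = 1$. Once this identification is made, both identities fall out from short manipulations.

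For \eref{alphanorm_eq1}, I would substitute $\boldsymbol{\alpha} = \boldsymbol{\alpha}_0/\|\boldsymbol{\alpha}_0\|_1$ into $L = \boldsymbol{\alpha}^T \mathbf{A}\boldsymbol{\alpha}$ and use $\mathbf{A}\boldsymbol{\alpha}_0 = \mathbf{e}$ to collapse the quadratic form to a linear one in $\boldsymbol{\alpha}_0$. Because $\boldsymbol{\alpha}_0 > \mathbf{0}$, the dot product $\mathbf{e}^T \boldsymbol{\alpha}_0$ equals $\|\boldsymbol{\alpha}_0\|_1$, and after the two factors of $\|\boldsymbol{\alpha}_0\|_1$ in the denominator cancel against one in the numerator, the expression reduces to $1/\|\boldsymbol{\alpha}_0\|_1$.

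For \eref{alphanorm_eq2}, I would read off the $k$th row of $\mathbf{A}\boldsymbol{\alpha}_0 = \mathbf{e}$, which asserts $\sum_i K(\mathbf{x}_k, \mathbf{x}_i)\alpha_{0,i} = 1$ for every support vector $\mathbf{x}_k$. Dividing through by $\|\boldsymbol{\alpha}_0\|_1$ yields $\sum_i \alpha_i K(\mathbf{x}_k, \mathbf{x}_i) = 1/\|\boldsymbol{\alpha}_0\|_1$, and combining with \eref{alphanorm_eq1} completes the claim. As a sanity check, one can also verify this from the other direction: since $L = \sum_k \alpha_k \bigl(\sum_i \alpha_i K(\mathbf{x}_k, \mathbf{x}_i)\bigr)$ and the bracketed quantity is constant in $k$ (the middle-term equality established inside the proof of \thmref{allsv}), $\|\boldsymbol{\alpha}\|_1 = 1$ forces that constant to equal $L$.

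There is no real obstacle; the whole argument is bookkeeping once $\boldsymbol{\alpha}_0$ is recognized as the positive solution to the unnormalized linear system supplied by \thmref{allsv}. The only step that deserves mild care is invoking positivity of $\boldsymbol{\alpha}_0$ to identify $\mathbf{e}^T\boldsymbol{\alpha}_0$ with $\|\boldsymbol{\alpha}_0\|_1$ without a sign ambiguity, which is exactly why \thmref{allsv} was stated with a strict positivity conclusion.
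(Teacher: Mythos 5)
Your proposal is correct and takes essentially the same route as the paper: for \eref{alphanorm_eq1} both arguments substitute $\boldsymbol{\alpha}=\boldsymbol{\alpha}_0/\|\boldsymbol{\alpha}_0\|_1$ into $L=\boldsymbol{\alpha}^T\mathbf{A}\boldsymbol{\alpha}$ and use $\mathbf{A}\boldsymbol{\alpha}_0=\mathbf{e}$ with positivity of $\boldsymbol{\alpha}_0$ to get $\mathbf{e}^T\boldsymbol{\alpha}_0=\|\boldsymbol{\alpha}_0\|_1$. For \eref{alphanorm_eq2} you chain through \eref{alphanorm_eq1} via the $k$th row of the linear system, whereas the paper independently collapses the double sum $\sum_{i,j}\alpha_i\alpha_jK(\mathbf{x}_i,\mathbf{x}_j)$ using the constancy of $\sum_i\alpha_iK(\mathbf{z},\mathbf{x}_i)$ over support vectors and $\|\boldsymbol{\alpha}\|_1=1$ --- a variant you already note as your sanity check, so the two are interchangeable.
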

\begin{proof}
To prove \eref{alphanorm_eq1}, note that by \thmref{allsv}, $\boldsymbol{\alpha}_0$ satisfies $\mathbf{A}\boldsymbol{\alpha}_0 = \mathbf{e}$. Then
\begin{equation}
\begin{split}
L&=\boldsymbol{\alpha}^T\mathbf{A}\boldsymbol{\alpha} = \frac{\boldsymbol{\alpha}_0^T}{\|\boldsymbol{\alpha}_0\|_1}\mathbf{A}\frac{\boldsymbol{\alpha}_0}{\|\boldsymbol{\alpha}_0\|_1} \\
&= \frac{\boldsymbol{\alpha}_0^T\mathbf{e}}{\|\boldsymbol{\alpha}_0\|_1^2} = \frac{\|\boldsymbol{\alpha}_0\|_1}{\|\boldsymbol{\alpha}_0\|_1^2}
= \frac{1}{\|\boldsymbol{\alpha}_0\|_1}.
\end{split}
\end{equation}
To prove \eref{alphanorm_eq2}, note that $\sum_i\alpha_iK(\mathbf{z},\mathbf{x}_i)$ is the first term of the right-hand side of \eref{dtb}. So proving \eref{alphanorm_eq2} is equivalent to proving
\begin{equation}
\sum_{i,j}\alpha_{i}\alpha _{j}K(\mathbf{x}_i,\mathbf{x}_j) = \sum_i\alpha_iK(\mathbf{z},\mathbf{x}_i),
\end{equation}
where $\mathbf{x}_i$, $\mathbf{x}_j$ are support vectors, and $\mathbf{z}$ is any one of the support vectors. The following equation can be derived:
\begin{equation}
\begin{split}
\sum_{i,j}\alpha_{i}\alpha _{j}K(\mathbf{x}_i,\mathbf{x}_j) 
&= \sum_j\alpha_{j}\Big(\sum_i\alpha_iK(\mathbf{x}_i,\mathbf{x}_j)\Big)\\
&=\Big(\sum_i\alpha_iK(\mathbf{z},\mathbf{x}_i)\Big)\Big(\sum_j\alpha_{j}\Big)\\
&=\sum_i\alpha_iK(\mathbf{z},\mathbf{x}_i).
\end{split}
\end{equation}
The second equality is derived from the fact that the term in parentheses is a constant for any support vector $\mathbf{x}_j$, and the third equality is derived from the fact that the sum of all $\alpha_i$'s is 1.
\end{proof}

\cref{alphanorm} shows a direct relationship between the objective function value, the 1-norm of the solution vector to the linear system $\mathbf{A}\boldsymbol{\alpha} = \mathbf{e}$, and the scoring threshold. The objective function value is a very important term of an SVDD model and can be requested by the user at any time. When the solution vector of the linear system is derived, the inverse of its 1-norm directly gives the objective function value, and the calculations in \eref{flexmat} are avoided. At the same time, $L$ is also the scoring threshold for the current model. Only the second term in \eref{dtb} needs to be computed when a new data point needs to be scored. The results from \cref{alphanorm} help make our FISVDD algorithm more efficient.

\section{Fast Incremental SVDD Learning Algorithm}\label{fisvdd}
We propose a fast incremental algorithm of SVDD (FISVDD). The central idea of FISVDD is to minimize the objective function (\ref{flexmat}) by quickly updating the inverse of similarity matrices in each iteration. Suppose that we begin with a support vector set $\mathbf{x}_1,\mathbf{x}_2,\ldots,\mathbf{x}_k$. When a new data point $\mathbf{x}_{k+1}$ comes in, by \thmref{allsv} the linear system $\mathbf{A}_{k+1}\boldsymbol{\alpha}=\mathbf{e}$ will have a positive solution if the $k+1$ data points form a new support vector set, and the normalized $\boldsymbol{\alpha}$ vector gives the $\alpha_i$'s. However, if at least one of the entries in the solution is negative, that indicates there is at least one interior point in the set. Then we are able to drop the negative $\alpha_i$ that has the largest $|d^2-R^2|$ magnitude and solve the shrunken $k\times k$ linear system. If the system has a positive solution, then we have found a support vector set. Otherwise, we can continue to drop the next negative $\alpha_i$ that has the largest $|d^2-R^2|$ magnitude and solve the $(k-1)\times (k-1)$ linear system, and so on. It is worth noting that if more than one variable is dropped from the system, the dropped data points should be re-scored against the new boundary to determine whether the KKT conditions are violated. If the KKT conditions are violated, then the system will expand again. We provide details below. 

\subsection{The FISVDD Algorithm}
The FISVDD algorithm is shown in \aref{fisvddalg}. It contains three parts of FISVDD: expanding (which is shown in \aref{expand}), shrinking (which is shown in \aref{shrink}), and bookkeeping. 

\subsubsection{Stage 1, Expanding}
When a new data point $\mathbf{x}_{k+1}$ comes in, it is scored to determine whether it falls in the interior. If so, it is immediately discarded. Otherwise, it is combined with existing support vectors to form an expanded set. The corresponding inverse matrix of the similarity matrix and its row sums are then updated by \lref{increinv}. If all row sums are positive, then $\mathbf{x}_{k+1}$ is another support vector and the normalized $\boldsymbol{\alpha}$ vector contains the updated $\alpha_i$'s. If $\alpha_{k+1}\le0$, then $\mathbf{x}_{k+1}$ is taken out of the expanded set and the support vector set returns to the previous set. If $\alpha_{k+1}>0$ but there is at least one $\alpha_{i}\le0$, then there is at least one interior point in the expanded set and the shrinking step is called. The expanding step is given in \aref{expand}.

\begin{algorithm}
   \centering
   \caption{Expand}\label{expand}
\begin{algorithmic}[1]
   \STATE {\bfseries Input:} {$\mathbf{x}_{k+1},\boldsymbol{\alpha},\mathrm{SV},\sigma,\mathbf{A}^{-1}$}
   \STATE $\mathbf{v} \gets K(\mathbf{x}_{k+1},\mathrm{SV},\sigma)$    
   \STATE $\mathbf{A}_{\mathrm{old}}^{-1}\gets\mathbf{A}^{-1}$
   \STATE $\mathbf{A}^{-1}\gets$ \eref{ak+1inv} 
   \STATE $\boldsymbol{\alpha}_{\mathrm{old}} \gets \boldsymbol{\alpha}$
   \STATE $\boldsymbol{\alpha}\gets$ row sums of $\mathbf{A}^{-1}$
   \IF {$\alpha_{k+1}\le 0$}
   		\STATE $\mathbf{A}^{-1}\gets\mathbf{A}_{\mathrm{old}}^{-1}$
   		\STATE $\boldsymbol{\alpha}\gets\boldsymbol{\alpha}_{\mathrm{old}}$
   \ELSE
   		\STATE SV $\gets$ SV + $\mathbf{x}_{k+1}$
   \ENDIF
\STATE \textbf{Return:} {$\boldsymbol{\alpha},\mathrm{SV},\mathbf{A}^{-1}$}
\end{algorithmic}
\end{algorithm}

\subsubsection{Stage 2, Shrinking}
If $\alpha_{k+1}>0$ but at least one $\alpha_i<0$, then at least one existing support vector in the support vector set has become an interior point. We need to identify and discard such vectors. By \cref{notallsvcor}, we can shrink the support vector set one vector at a time until a positive $\boldsymbol{\alpha}$ is obtained. It is possible that there are several negative entries in the $\boldsymbol{\alpha}$ vector, but after taking out one negative entry all other entries are positive. Hence, it is recommended to take out one vector at a time rather than taking out several vectors. Moreover, taking out several vectors at once slows the algorithm because then we need to calculate the inverse of matrices whose rank is larger than 1. Although there is no certain way of choosing which vector to remove first, in FISVDD we choose the negative $\alpha_i$ that has the largest magnitude. From \eref{alphak+1} and \eref{dist2-r2} and permuting columns and rows in $\mathbf{A}_{k+1}$, we have
\begin{equation}\label{shrink1}
\alpha_{k+1} = \frac{d^2-R^2}{2(1-\mathbf{v}^T\mathbf{A}_k^{-1}\mathbf{v})},
\end{equation}
where $\alpha_{k+1}$ is the $\alpha_i$ of interest permuted to the $(k+1)$th position. It can be seen from \eref{shrink1} that if the denominators of the data points that have negative $\alpha_i$'s are close, then a data point that has a larger $|\alpha_i|$ tends to have a larger $|d^2-R^2|$, which means it lies farther from the boundary. Intuitively, a data point farther from the boundary is more likely to be a true interior point. Although not guaranteed, the data point farthest from the boundary is typically the one we want to remove first.

\begin{algorithm}
\centering
\caption{Shrink}\label{shrink}
\begin{algorithmic}[1]
\STATE {\bfseries Input:} {$\boldsymbol{\alpha},\mathrm{SV},\mathbf{A}^{-1},\mathrm{Backup}$}
\STATE $\mathrm{flag} \gets 1$
\WHILE {$\mathrm{flag} = 1$}
\STATE $p \gets \arg\min\boldsymbol{\alpha}$
\STATE $\mathrm{Backup} \gets \mathrm{Backup} + \mathbf{x}_p$
\STATE $\mathrm{SV} \gets \mathrm{SV} - \mathbf{x}_p$
\STATE $\mathbf{A}^{-1}\gets$\eref{ak-1inv} 
\STATE $\boldsymbol{\alpha}\gets$ row sums of $\mathbf{A}^{-1}$
\IF {$\min\boldsymbol{\alpha}>0$}
        \STATE $\mathrm{flag} \gets 0$
\ENDIF
\ENDWHILE
\STATE \textbf{Return} {$\boldsymbol{\alpha},\mathrm{SV},\mathbf{A}^{-1},\mathrm{Backup}$}
\end{algorithmic}
\end{algorithm}

\subsubsection{Bookkeeping}
When the shrinking algorithm is performed, some of the previous support vectors are taken out of the support vector set if they have negative $\alpha_i$'s. However, having a negative $\alpha_i$ in the middle of a shrinking process does not rule a support vector out from the final set. A data point is considered to be an interior point only if it satisfies $(d^2-R^2)<0$ when scored with the final support vector set. Therefore, it is necessary to recheck whether the data points taken out of the support vector set are truly interior points. In FISVDD, we build a backup set when the shrinking stage begins. When a data point is taken out of the support vector set, it is put into the backup set. Then the inverse matrix is ``downdated'' with \eref{ak-1inv} and its row sums are calculated. The shrinking continues until there are no negative entries in the $\boldsymbol{\alpha}$ vector. The backup set keeps growing as the linear system shrinks. When there are no negative values in $\boldsymbol{\alpha}$, we have found a support vector set, although it might not be the final one. Then the data points in the backup set are scored with the support vector set one by one in a first in, first out order. To increase the algorithm's efficiency, the backup set is scanned only once. If $(d^2-R^2)>0$ for a data point, then the expanding algorithm is called again, and the data point is removed from the backup set and placed back into the support vector set. The expanding finishes when all data points in the backup set have $(d^2-R^2)\le0$. Although the same check can be performed on all prior data, doing so would cost too much memory and the gains are far less significant. So the backup set is emptied when each new data point arrives.

For completeness, we add a check to the unnormalized $\boldsymbol{\alpha}$ vector to make sure that the result in each iteration is improved from the previous iteration. By \cref{alphanorm}, the result is improved if the 1-norm of the unnormalized $\boldsymbol{\alpha}$ vector increases. At the end of each iteration, this norm is compared with the norm in the previous iteration. If the norm decreases, then the result from the previous iteration is restored. None of our experiments have ever violated this condition.

\begin{algorithm}
\caption{Fast Incremental Support Vector Data Description (FISVDD)}\label{fisvddalg}
\begin{algorithmic}[1]
\STATE {\bfseries Input:} {$\mathrm{Initialize}(\boldsymbol{\alpha},\mathrm{SV},\mathbf{A}^{-1},\sigma$)}
\FOR{$i \gets 1,\, n$}
\STATE $Q\gets$ \eref{dtb}
\IF {$Q \le 0$}
        \STATE pass
\ELSE
        \STATE $\boldsymbol{\alpha},\mathrm{SV},\mathbf{A}^{-1}\gets\mathrm{Expand}(\mathbf{x}_{k+1},\boldsymbol{\alpha},\mathrm{SV},\sigma,\mathbf{A}^{-1})$
        \IF {$\min\boldsymbol{\alpha}<0$}
                \STATE $\mathrm{Backup}\gets$ Empty set
                \STATE $\boldsymbol{\alpha},\mathrm{SV}, \mathbf{A}^{-1},\mathrm{Backup}\gets\mathrm{Shrink}(\boldsymbol{\alpha},\mathrm{SV},\mathbf{A}^{-1},\mathrm{Backup})$
                \IF {$\mathbf{card}(\mathrm{Backup})>1$}
                        \FOR{$j \gets 1,\, \mathbf{card}(\mathrm{Backup})$} 
                                \STATE $Q\gets$\eref{dtb}
                                \IF {$Q>0$}
                                        \STATE $\boldsymbol{\alpha},\mathrm{SV},\mathbf{A}^{-1}\gets\mathrm{Expand}(\mathrm{Backup}_j,\boldsymbol{\alpha},\mathrm{SV},\sigma,\mathbf{A}^{-1})$
                                \ENDIF
                        \ENDFOR
                \ENDIF
        \ENDIF
\STATE $\boldsymbol{\alpha}\gets\boldsymbol{\alpha}/\|\boldsymbol{\alpha}\|_1$
\ENDIF
\ENDFOR
\end{algorithmic}
\end{algorithm}

To summarize, FISVDD is fast and computationally efficient because the algorithm ignores interior points and is built solely on matrix manipulations. First, FISVDD tries to obtain the optimal solution in each iteration without using the interior points, similar to the idea mentioned in \citet{syed1999incremental}. Results from many experiments show that if a proper Gaussian bandwidth is chosen, then the number of support vectors should be far smaller than the total number of observations. FISVDD takes advantage of this fact by calculating only the similarities between the new data points and the support vectors.

Secondly, it can be seen from \aref{fisvddalg} that FISVDD is based only on matrix manipulation. Matrix inverse updating steps are the core of FISVDD, which lets the system itself choose which data points to move between support vector sets and interior point sets. Sometimes the choice of the system might not be optimal, but the existence of backup sets allows the system to correct itself and removes a significant number of calculations.

\section{Implementation Details}\label{misc}
In this section we discuss several important details for implementing FISVDD.

\subsection{Initialization}
A key advantage of FISVDD is that the similarity matrix $\mathbf{A}$ is directly calculated only at initialization. As stated in \sref{fisvdd}, each iteration calculates only the similarities between a new data point and the existing support vectors. These are used to update the inverse of the similarity matrix; the similarity matrix is calculated only at initialization. Once the burn-in data points are selected, their similarity matrix $\mathbf{A}$ and its inverse $\mathbf{A}^{-1}$ are calculated. After the row sums of $\mathbf{A}^{-1}$ are calculated, the shrinking step in \aref{shrink} is used to pick out the interior points. Then the vector that contains the normalized row sums of $\mathbf{A}^{-1}$ is the initial $\boldsymbol{\alpha}$.

\subsection{Memory}
For any online method, it is important to make sure that both of the following conditions hold:
\begin{itemize}
\item The complexity in each step is small.
\item Memory usage will never expand out of control even for very large data.
\end{itemize}  
For FISVDD, the two challenges are handled smoothly. The first part is easy to see: The key parts in the algorithm (expanding and shrinking the linear systems) require only $O(k^2)$ multiplications each time, where $k$ is the number of support vectors. In addition, $k$ should be far less than the total number of the whole data set if a proper Gaussian kernel bandwidth $\sigma$ is chosen.  

For the second part, the number of support vectors can indeed grow large with streaming data. To avoid the potential threat of memory expanding out of control, we set a parameter, $M$, for the maximal number of support vectors, where $M$ depends on availability of memory. When $M$ is reached, the number of support vectors will not grow large. If a new data point $\mathbf{x}_{k+1}$ satisfies $d^2>R^2$, then one of the three situations will occur:
\begin{itemize}
\item $\alpha_{k+1}>0$ but at least one of the $\alpha_i$'s is less than or equal to 0. In this case, the algorithm runs normally to select the interior points.
\item All $\alpha_{i}$'s are greater than 0, but $\alpha_{k+1}$ is the smallest among all $\alpha_i$'s. In this case, $\alpha_{k+1}$ is discarded.
\item All $\alpha_{i}$'s are greater than 0, and $\alpha_{k+1}$ is not the smallest among all $\alpha_i$'s. In this case, the support vector that has the smallest $\alpha_{i}$ is replaced by $\mathbf{x}_{k+1}$, and the new $\alpha_i$'s are updated.
\end{itemize}

By handling these three cases, the number of support vectors will not exceed $M$, and the memory usage in each step is controlled.

\subsection{Outliers and Close Points}\label{epsilon}
Until now, our analysis focused primarily on describing the boundary of the streaming data. Another important feature of SVDD is that it finds outliers in the data so that further investigations can be taken. In \cite{laskov2006incremental} and \cite{scheinberg2006efficient}, data points are classified as outliers based on $\alpha_i$ values. FISVDD assumes that outliers are far from normal data and hence do not influence the support vectors and the $\alpha_i$'s. In addition, we assume that the boundary that is determined by the support vectors is robust to outliers. Note that if a data point is far from the support vectors, the $\mathbf{v}$ vector in \eref{vvec} should be close to a zero vector, which indicates that the largest value in $\mathbf{v}$ should be close to 0. In FISVDD, a data point $\mathbf{z}$ is classified as an outlier if it satisfies the following condition for a preselected parameter $\epsilon_1>0$:
\begin{equation}\label{outlierdef}
\max{\mathbf{v}} < \epsilon_1.
\end{equation}
If $\mathbf{z}$ is classified as an outlier, then it is passed to further investigation, and no $\alpha$ value is assigned to it.

Another special case we have to consider is a new data point that is very close to one of the existing support vectors. Although in practice it is rare that a new data point is exactly the same as an existing support vector, it is possible that they are very close to each other. In this case, the similarity matrix $\mathbf{A}$ will be ill-conditioned and $\mathbf{A}^{-1}$ might be not accurate. We can avoid this situation by also looking at the maximal entry value in $\mathbf{v}$. If a new data point is very close to one of the support vectors, then the maximal entry value in $\mathbf{v}$ will be close to 1. In FISVDD, a point is discarded if it satisfies the following condition for a preselected parameter $\epsilon_2>0$:
\begin{equation}\label{outlierdef}
\max{\mathbf{v}} > 1-\epsilon_2.
\end{equation}

Finally, note that these preprocessing steps can help prevent unnecessary calculations if the Gaussian kernel bandwidth $\sigma$ is not a proper bandwidth. If $\sigma$ is too small, then every data point tends to be a support vector and the similarity between every pair of data points is close to 0. If $\sigma$ is too large, then the similarity between every pair of data points is close to 1. Introducing $\epsilon_1$ and $\epsilon_2$ can prevent these cases.

\section{Experiments}\label{experiments}
We examined the performance of FISVDD with four real data sets: shuttle data \citep{Lichman:2013}, mammography data \citep{woods1993comparative}, forest cover (ForestType) data \citep{Rayana:2016}, and the SMTP subset of KDD Cup 99 data \citep{Rayana:2016}. The purpose of our experiments is to show that compared to the incremental SVM method (which can achieve global optimal solutions), the FISVDD method does not lose much in either objective function value or outlier detection accuracy while it demonstrates significant gains in efficiency. Our experiments used 4/5 of the normal data, randomly chosen, for training. The remaining normal data and the outliers together form the testing sets. All duplicates in the data sets are removed beforehand. Proper Gaussian bandwidths are selected by using fivefold cross validation, although selecting a proper Gaussian bandwidth is beyond the scope of this paper.  SAS/IML\textsuperscript{\tiny\textregistered} software is used in performing the experiments. In this paper, we compare FISVDD with the one-class incremental SVM method \citep{laskov2006incremental}, a well-known technique for performing global optimal SVDD. For each method, the following quantities are measured in \tref{table_compare}:

\begin{itemize}
\item Time: The time used to learn the SVDD model.
%\item Objective function values: The optimal objective function values obtained with \eref{flexmat}. 
\item Objective function value (OFV): The objective function values that were obtained with \eref{flexmat} after each iteration.
\item Number of support vectors (\#sv): The number of support vectors when the training phase is finished. This number is related to the efficiency of the testing phase. When more support vectors exist, more calculations are required in testing. 
\end{itemize}

The time consumed by the incremental SVM method with interior points discarded after each iteration is listed in parentheses. \tref{table_compare} also lists the settings for the experiments, including Gaussian bandwidth (Sigma), number of training observations (\#Train obs), number of testing observations (\#Test obs), and number of variables (\#Var).

\begin{table}[h]
\centering
\caption{Experimental Results of FISVDD and Incremental SVM on Different Data Sets}\label{table_compare}
\resizebox{\columnwidth}{!}{
\begin{tabular}{l | c c c c c c c c}
\toprule
Data & Sigma & Method &$\#$Train obs &$\#$Test obs &\#Var & OFV & Time (s) &$\#$sv\\
\midrule
Shuttle				&5.5	&FISVDD	&36469	&21531	&9		&$\expnumber{1.7378}{-3}$		&251.01		&1736	\\
						&		&Inc. SVM	&			&			&		&$\expnumber{1.7369}{-3}$		&22923.57	&1926	\\
						&		&				&			&			&		&											&(312.65)	&		 	\\
CoverType			&470	&FISVDD	&226641	&59407	&10	&$\expnumber{1.14158}{-2}$	&19.47		&432	 	\\
						&		&Inc. SVM	&			&			&		&$\expnumber{1.14155}{-2}$	&12954.81	&470  	\\
						&		&				&			&			&		&											&(29.45)		&		 	\\
Mammography		&0.8	&FISVDD	&6076	&1773	&6		&$\expnumber{9.8134}{-3}$		&1.19 		&317	 	\\
						&		&Inc. SVM	&			&			&		&$\expnumber{9.8008}{-3}$		&67.01   	&317  	\\
						&		&				&			&			&		&											&(1.58)		&		 	\\
Smtp					&6		&FISVDD	&56967	&14263	&3		&0.393									&0.27			&5		 	\\
						&		&Inc. SVM	&			&			&		&0.393									&2.49			&5		  	\\
						&		&				&			&			&		&											&(0.38)		&		 	\\
\bottomrule
\end{tabular}
}
\end{table}

\tref{table_compare} shows that for the same Gaussian bandwidth, the FISVDD method is much faster than the incremental SVM method, with only a tiny sacrifice in the objective function value. Because incremental SVM achieves global optimal solutions, the solutions provided by FISVDD are very close to the global optimal solutions. Even with interior points discarded after each iteration, FISVDD is faster than incremental SVM for the data sets in our experiments.  As explained in \sref{fisvdd}, FISVDD is faster because it is based solely on matrix manipulation and thus many calculations are saved. 

\fref{pics} shows plots of the F-1 measure \citep{tan2007introduction} of the accuracy of FISVDD and incremental SVM with different training sizes. The plots show that by discarding interior points at the end of each iteration, there is almost no loss in the quality of outlier detection.

\begin{figure}
\centering
\begin{subfigure}{.5\columnwidth}
  \centering
  \includegraphics[width=\columnwidth]{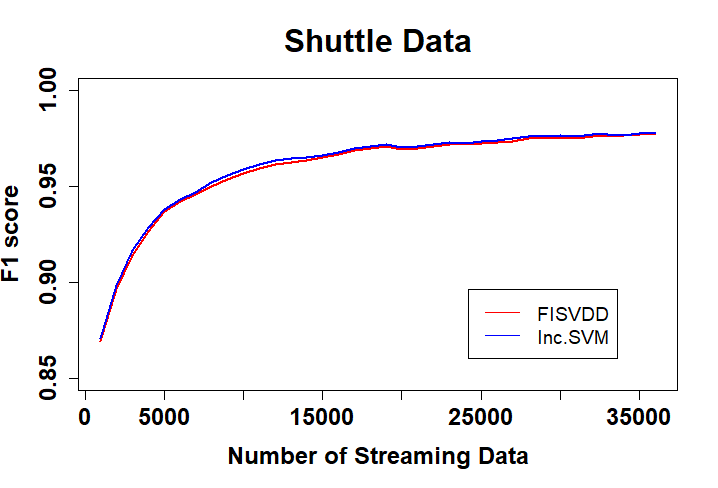}
 \caption{Shuttle Data}
  \label{fig:sub1}
\end{subfigure}%
\begin{subfigure}{.5\columnwidth}
  \centering
  \includegraphics[width=\columnwidth]{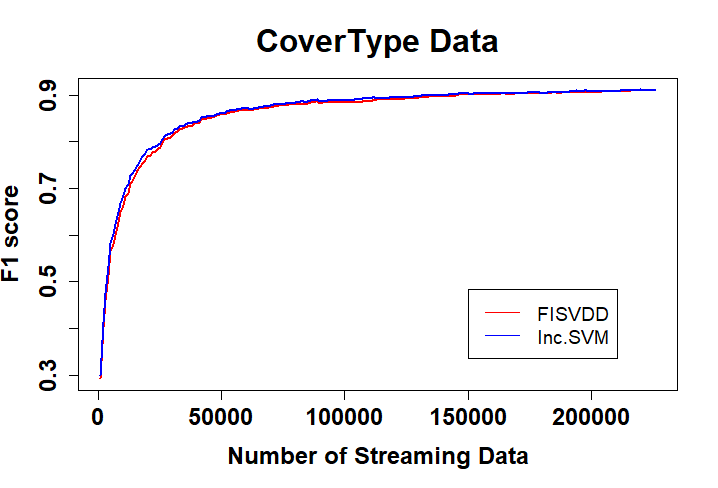}
  \caption{CoverType Data}
  \label{fig:sub2}
\end{subfigure}\\[1ex]
\begin{subfigure}{.5\columnwidth}
  \centering
  \includegraphics[width=\columnwidth]{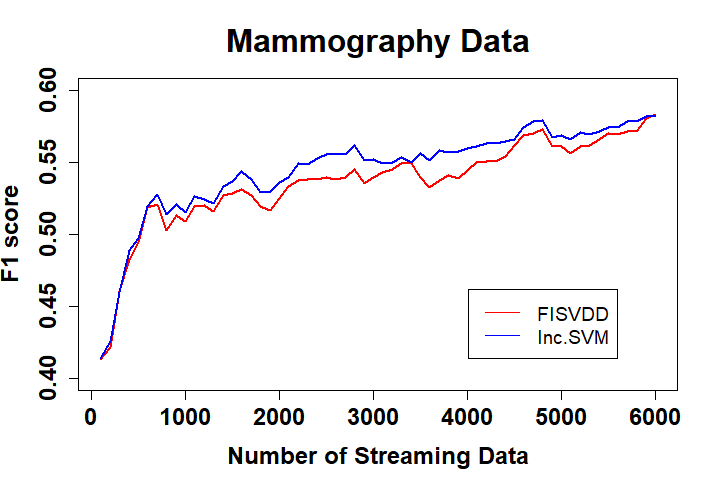}
  \caption{Mammography Data}
  \label{fig:sub3}
\end{subfigure}%
\begin{subfigure}{.5\columnwidth}
  \centering
  \includegraphics[width=\columnwidth]{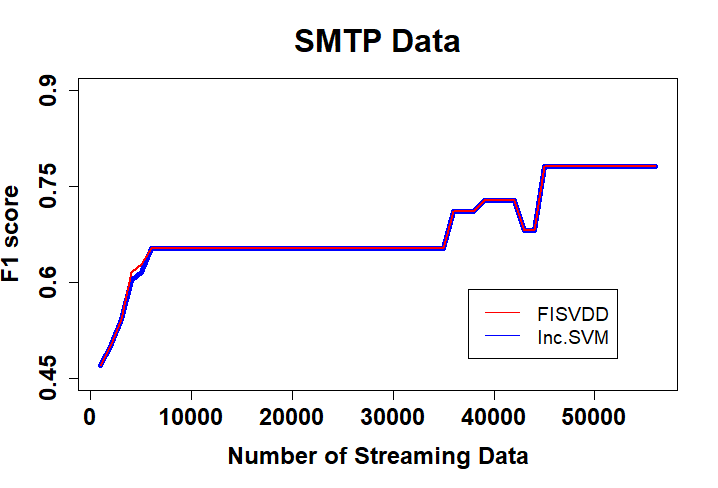}
  \caption{SMTP Data}
  \label{fig:sub4}
\end{subfigure}
\caption{F-1 Measure for Different Data Sets}
\label{pics}
\end{figure}

\section{Conclusion}\label{conclusion}
This paper introduces a fast incremental SVDD learning algorithm (FISVDD), which is more efficient than existing SVDD algorithms. In each iteration, FISVDD considers only the incoming data point and the support vectors that were determined in the previous iteration. The essential calculations of FISVDD are contributed from incremental and decremental updates of a similar matrix inverse $\mathbf{A}^{-1}$. This algorithm builds on an observation that is natural in SVDD models but has not been fully utilized by existing SVDD algorithms: that all support vectors on the boundary have the same distance to the center of sphere in a higher-dimensional feature space as mapped by the Gaussian kernel function. FISVDD uses the signs of entries in the row sums of $\mathbf{A}^{-1}$ to determine the interior points and support vectors and uses their magnitudes to determine the Lagrange multiplier $\alpha_i$ for each support vector. Experimental results indicate that FISVDD gains much efficiency with almost no loss in accuracy and objective function value.

\section*{Acknowledgement}
We would like to thank Anne Baxter, Maria Jahja, and Cong Meng for their help in this paper. We would also like to thank Minghui Liu, Joshua Griffin, Yuwei Liao, and Seunghyun Kong for discussions that are related to SVDD.

\bibliographystyle{plainnat}
\bibliography{mybib1}
\end{document}